\newtheorem{theorem}{Theorem}
\newtheorem{lemma}{Lemma}
\definecolor{kxcolor}{HTML}{1d3557}
\DeclareMathOperator{\R}{\mathbb{R}} 
\DeclareMathOperator{\St}{\mathcal{S}}
\newtheorem*{theorem*}{Theorem} 
\newtheorem*{corollary*}{Corollary}
\newtheorem*{lemma*}{Lemma}
\newtheorem*{proposition*}{Proposition}
\title{Policy Gradient for Reinforcement Learning with General Utilities}
\author{
Navdeep Kumar, Technion\\
\texttt{navdeepkumar@campus.technion.ac.il} \\
Kaixin Wang, NUS\\
\texttt{kaixin.wang@u.nus.edu} \\
Kfir Levy, Technion\\
\texttt{kfirylevy@technion.ac.il} \\
Shie Mannor, Technion\\
\texttt{shie@ee.technion.ac.il} \\
}
\begin{document}

\maketitle

\begin{abstract}
In Reinforcement Learning (RL), the goal of agents is to discover an optimal policy that maximizes the expected cumulative rewards. This objective may also be viewed as finding a policy that optimizes a linear function of its state-action occupancy measure, hereafter referred as Linear RL. However, many supervised and unsupervised RL problems are not covered in the Linear RL framework, such as apprenticeship learning, pure exploration and variational intrinsic control, where the objectives are non-linear functions of the occupancy measures.
RL with non-linear utilities looks unwieldy, as methods like Bellman equation, value iteration, policy gradient, dynamic programming that had tremendous success in Linear RL, fail to trivially generalize.
In this paper, we derive the policy gradient theorem for RL with general utilities. The policy gradient theorem proves to be a cornerstone in Linear RL due to its elegance and ease of implementability. Our policy gradient theorem for RL with general utilities shares the same elegance and ease of implementability.
Based on the policy gradient theorem derived, we also present a simple sample-based algorithm.
We believe our results will be of interest to the community and offer inspiration to future works in this generalized setting.
\end{abstract}
 



\section{Introduction}
Reinforcement Learning (RL) is a sequential decision problem where an agent interacts with an environment and learns to behave optimally. Agent observes the state of the environment and decides to take some action, then the environment evolves to a different state depending on the current state and current action of the agent, this process continues till termination or forever. In RL (henceforth Linear RL), the agents also receive a reward signal after taking an action that depends on both the state of the environment and the current action of the agent. The goal of the agent (in Linear RL) is to learn to take actions such that it maximizes its cumulative reward  \cite{Sutton1998}. It can also be seen as maximizing the objective that is linear in occupancy measure \cite{Puterman1994MarkovDP}. A significant body of work is dedicated to solving the RL problem efficiently in challenging domains \cite{mnih2015humanlevel, Dsilver2020}.\\

There are many ways of solving Linear RL using Bellman's equation, the value function, policy search, and dynamic programming. Among them, policy gradient methods in Linear RL are one of the most widely used methods that arise from the very elegant and easily implementable policy gradient theorem by \cite{PolicyGradient}. And recent works, theoretically guarantee the convergence to global optima in Linear RL \cite{agarwal2020theory}. Junyu et al. \cite{zhang2020variational} first proved that the exact gradient descent in policy space will converge to global minima in convex RL under some mild conditions.\\

However, not all sequential decision problems of interest take this form, for example, Apprentice Learning, diverse skill discovery, pure exploration and constrained MDPs, among others; see \cite{zahavy2021reward}. In general, an agent can be tasked to learn the policy that is a general function of occupation measure of the policies, henceforth General RL. And as a special case, when the objective function is convex (or concave) in the occupancy measure, it is known as Convex RL\cite{zhang2020variational,geist2022concave,zhang2020variational,zahavy2021reward,mutti2022challenging}.

\section{Background and Notations}

\textit{Notations} \quad In what follows, we use $\mathds{1}$ to denote the indicator function. $\langle\cdot\rangle$ denotes dot product. $\Delta_\mathcal{U}$ denotes the space of probability distributions over a set $\mathcal{U}$.

A Markov Decision Process (MDP) is defined by a sextuple $(\mathcal{S},\mathcal{A},\gamma, R,P, q)$, where $\mathcal{S}$ is the state space, $\mathcal{A}$ is the action space, $\gamma \in [0,1)$ is the discount factor, $R \in \mathbb{R}^{\mathcal{S}\times\mathcal{A}}$ is the reward vector, $P \in (\Delta_{\mathcal{S}})^{\mathcal{S}\times\mathcal{A}}$ is the transition kernel and $q \in \Delta_{\mathcal{S}}$ is the initial distribution over the state space $\mathcal{S}$~\cite{Sutton1998}. A stationary policy $\pi \in (\Delta_{\mathcal{A}})^\mathcal{S}$ maps states to probability distributions over actions. Moreover, $P(s'|s,a), \pi(a|s)$ represent the probability of transition from state $s$ under action $a$ to state $s'$ and probability of taking action $a$ in state $s$ by policy $\pi$ respectively. Let $\Pi$ be the set of all stationary policies.
Let $\mu^{\pi} \in (\Delta_{\mathcal{S}})^{\mathcal{S}\times\mathcal{A}}$ be the occupancy measure of policy $\pi$, defined as~\cite{Puterman1994MarkovDP}
\begin{equation}
    \mu^{\pi}(s,a):= \sum_{n=0}^{\infty}\gamma^n\mathbb{E}[\mathds{1}(s_n=s,a_n=a)|s_0\sim q,a_t\sim \pi(\cdot|s_t), s_{t}\sim P(\cdot|s_{t-1},a_{t-1}), \forall\,t, 1\le t\leq n].
\end{equation}
With slight abuse of notation, we denote $\sum_{a\in\mathcal{A}}\mu^{\pi}(s,a)$ with $\mu^{\pi}(s)$. In Linear RL problems, we maximize the expected discounted reward under the policy $\pi$, that is,
\begin{equation}
     \max_{\mu^{\pi}\in\mathcal{K}} \langle \mu^{\pi}, R\rangle,
\end{equation}
where $\mathcal{K}:=\{\mu^{\pi} |\pi\in\Pi\}$ is set of occupancy measure of all stationary policies \cite{Puterman1994MarkovDP}. Here, we consider a more general objective
\begin{equation}
    \min_{\mu^{\pi}\in\mathcal{K}} f(\mu^{\pi}) 
\end{equation}
where $f:\mathcal{K}\to\mathbb{R}$ is a differentiable function, which we refer to as General RL (RL with general utilities). The differentiablity of the function is assumed for the simplicity of presentation. In case of apprentice learning \cite{ApprenticeshipLearning} and pure exploration \cite{pureExploration}, objective functions are $f(\mu^\pi) = ||\mu^\pi -\mu_\text{expert}||^2$ and  $f(\mu^\pi) = \mu^\pi\cdot\log(\mu^\pi)$ respectively.
Linear RL is extensively studied \cite{Sutton1998}, and there have been a few works on Convex RL (RL with convex utilities)\cite{zhang2020variational,geist2022concave,zhang2020variational,zahavy2021reward,mutti2022challenging}.

\section{Policy Gradient for Convex MDPs}

To handle large state space, we often parameterize the policy with $\theta\in\Theta\subset\mathbb{R}^d$, \textit{i.e.}, $\pi=\pi_\theta$.
For clarity, we will slightly abuse the notation and simply write $\pi$.
To apply gradient descent on policy space (or parameter space), we need the gradient w.r.t policy (or policy parameter):
\begin{equation}
    \nabla_\theta f : = \frac{\mathrm{d}f(\mu^{\pi})}{\mathrm{d}\theta}. 
\end{equation}
By the chain rule, we have
\begin{equation}
    \nabla_\theta f(\mu^{\pi}) = (\nabla_{\mu^\pi}f(\mu^{\pi}))^\top\frac{\mathrm{d}\mu^{\pi}}{\mathrm{d}\theta}= \sum_{s\in\mathcal{S}}\sum_{a\in\mathcal{A}}\frac{\partial f(\mu^{\pi})}{\partial\mu^{\pi}(s,a)} \nabla_{\theta} \mu^\pi(s,a).
\end{equation}
Let $\beta^\pi_k(s,a,s',a')=\mathbb{E}[\mathds{1}(s_k=s',a_k=a')\mid s_0=s,a_0=a,a_t\sim \pi(\cdot|s_t), s_{t}\sim P(\cdot|s_{t-1},a_{t-1}), \forall t\leq k] $ is the probability of transition from  state-action $(s, a)$ to state-action $(s', a')$ in exactly $k$ steps following the policy $\pi$. In addition, we denote $\beta^\pi(s',a',s,a) = \sum_{k=0}^{\infty}\gamma^k\beta^\pi_k(s',a',s,a) $. To derive the policy gradient theorem for convex RL, let us first look at the gradient of the occupancy measure w.r.t. the policy parameters. We can obtain the following equation,
\begin{equation}\label{eqn:gradient-occupancy}
    \nabla_{\theta}\mu^\pi(s,a) = \sum_{(s',a')\in\mathcal{S}\times\mathcal{A}} \mu^{\pi}(s') \beta^\pi(s',a',s,a) \nabla_{\theta}\pi(a'|s'),
\end{equation}
with proof referred in the appendix. Let $Q^\pi_R$ be the Q-value function for the policy $\pi$ and reward vector $R$, that is defined as
\begin{align}
    Q^\pi_R(s,a) &:= \sum_{n=0}^\infty \gamma^n \mathbb{E}[R(s_n,a_n)|s_0=s,a_0=a,a_t\sim\pi(\cdot|s_t),s_{t}\sim P(\cdot|s_{t-1},a_{t-1}),\forall t\leq n]\\
    &=\sum_{s',a'}\beta^\pi(s,a,s',a')R(s',a').
\end{align}
Now we are ready to present the policy gradient theorem for convex RL.
\begin{theorem}\label{rs:pgt}(Policy Gradient Theorem for convex RL)
\begin{equation}
    \nabla_\theta f = \sum_{s,a}\mu^{\pi}(s)Q^{\pi}_{R_\pi}(s,a)\frac{\mathrm{d}\pi(s,a)}{\mathrm{d}\theta},\quad \text{where}\,  R_\pi =\frac{d f(x)}{d x}|_{x = \mu^\pi}.
\end{equation}
\end{theorem}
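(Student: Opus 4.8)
The plan is to combine the chain-rule expression for $\nabla_\theta f$ with the gradient-of-occupancy-measure identity \eqref{eqn:gradient-occupancy}, and then recognize the resulting double sum as a $Q$-value function for a cleverly chosen reward vector. Concretely, starting from
\[
    \nabla_\theta f(\mu^{\pi}) = \sum_{s,a}\frac{\partial f(\mu^{\pi})}{\partial\mu^{\pi}(s,a)}\,\nabla_{\theta}\mu^\pi(s,a),
\]
I would substitute \eqref{eqn:gradient-occupancy} for $\nabla_\theta\mu^\pi(s,a)$. This yields a triple sum over $(s,a)$, $(s',a')$, weighted by $\mu^\pi(s')\,\beta^\pi(s',a',s,a)\,\nabla_\theta\pi(a'|s')$ and by $\partial f/\partial\mu^\pi(s,a)$, which is exactly the component $R_\pi(s,a)$ of the reward vector $R_\pi = \nabla_{x} f(x)\big|_{x=\mu^\pi}$.

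The next step is to swap the order of summation so that the sum over $(s,a)$ is innermost, holding $(s',a')$ fixed. Since $\sum_{s,a}\beta^\pi(s',a',s,a)R_\pi(s,a) = Q^\pi_{R_\pi}(s',a')$ by the second displayed definition of the $Q$-function, the inner sum collapses to $Q^\pi_{R_\pi}(s',a')$. After this collapse we are left with $\sum_{s',a'}\mu^\pi(s')\,Q^\pi_{R_\pi}(s',a')\,\nabla_\theta\pi(a'|s')$; relabeling the dummy indices $(s',a')\to(s,a)$ gives precisely the claimed identity. The interpretation is that General RL locally looks like Linear RL with the "reward" being the gradient of $f$ evaluated at the current occupancy measure, which is the content worth emphasizing.

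I expect the only genuine subtlety to be justifying the interchange of the (infinite, since $\beta^\pi$ sums over all step counts $k$) summations and the convergence of the series. This is handled by absolute convergence: $\gamma < 1$ makes $\beta^\pi(s',a',s,a) = \sum_k \gamma^k \beta^\pi_k(s',a',s,a)$ a convergent geometric-type series with $\sum_{s,a}\beta^\pi(s',a',s,a) \le 1/(1-\gamma)$, and on the finite state-action space the factors $\mu^\pi(s')$, $R_\pi(s,a)$, $\nabla_\theta\pi(a'|s')$ are all bounded, so Fubini/Tonelli applies and every rearrangement is legitimate. A secondary point is that \eqref{eqn:gradient-occupancy} itself is invoked as a black box (its proof is deferred to the appendix), so the argument here is essentially a bookkeeping computation once that identity and the $Q$-function identity are in hand; there is no real obstacle beyond keeping the index relabeling straight.
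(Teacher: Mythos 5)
Your proposal is correct and follows essentially the same route as the paper's proof: substitute Eqn.~\ref{eqn:gradient-occupancy} into the chain-rule expansion, exchange the order of summation, and collapse the inner sum $\sum_{s,a}\beta^\pi(s',a',s,a)R_\pi(s,a)$ to $Q^\pi_{R_\pi}(s',a')$ before relabeling indices. Your added remark on absolute convergence justifying the interchange is a detail the paper leaves implicit, but it does not change the argument.
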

\begin{proof}
Using Eqn.~\ref{eqn:gradient-occupancy}, we have
\begin{equation}
\begin{aligned}
    \nabla_\theta f
    =& \sum_{s\in\mathcal{S}}\sum_{a\in\mathcal{A}}\frac{\partial f(\mu^{\pi})}{\partial\mu^{\pi}(s,a)} \nabla_{\theta} \mu^\pi(s,a) =  \sum_{s\in\mathcal{S}}\sum_{a\in\mathcal{A}}R_{\pi}(s,a) \sum_{(s',a')\in\mathcal{S}\times\mathcal{A}} \mu^{\pi}(s') \beta^\pi(s',a',s,a) \nabla_{\theta}\pi(a'|s') \\
    =& \sum_{(s',a')\in\mathcal{S}\times\mathcal{A}} \mu^{\pi}(s') \sum_{s\in\mathcal{S}}\sum_{a\in\mathcal{A}}R_{\pi}(s,a)  \beta^\pi(s',a',s,a) \nabla_{\theta}\pi(a'|s')=\sum_{s\in\mathcal{S}}\sum_{a\in\mathcal{A}}\mu^{\pi}(s)Q^{\pi}_{R_\pi}(s,a)\frac{\mathrm{d}\pi(s,a)}{\mathrm{d}\theta},
\end{aligned}
\end{equation}
which concludes the proof.
\end{proof}
We believe that this policy gradient can be implemented with the same ease as the policy gradient for Linear RL. Why? In the standard policy gradient, we estimate Q-value for each policy (keeping the reward fixed) while updating the policy. To estimate the Q-value, we run a few trajectories of the current policy. Now, policy gradient for convex RL can be implemented with the same trajectories except with an extra step as follows. We run a few trajectories of the current policies as before, then we estimate the occupation measure  to obtain the current reward vector, and finally, estimate the Q-value using this reward vector and the same trajectories. Note that, we used the same trajectories to estimate the reward vector and Q-value. Hence, we don't need anything extra. This process can be easily done in parallel too if the update step sizes are small enough. We formalize this intuition and propose an algorithm for RL with general utilities in the next section. 

\section{Policy Gradient with Approximations}
The proof here carries over exactly the same way as in section 2 in \cite{Sutton1998}. We present here, just for the sake of completeness. Let $g_w:\mathcal{S}\times\mathcal{A} \to \mathcal{R}$ be our approximation for $Q^\pi$, with parameter $w$. We use $Q^\pi$ as a shorthand for $Q^\pi_{R_\pi}$. Assuming local convergence,
\begin{align}\label{eq:local_convergence}
    \sum_{s,a}\mu^\pi(s)\pi(s,a)\big[Q^\pi(s,a)-g_w(s,a)\big]\frac{\partial g_w(s,a)}{\partial w} = 0,
\end{align}
and compatibility with the policy parametrization 
\begin{align}\label{eq:compatibility}
    \frac{\partial g_w(s,a)}{\partial w} = \frac{\partial \pi(s,a)}{\partial \theta} \frac{1}{\pi(s,a)}.
\end{align}
Combining \eqref{eq:local_convergence} and \eqref{eq:compatibility}, we have
\begin{align}
    \sum_{s,a}\mu^\pi(s)\frac{\partial \pi(s,a)}{\partial \theta}\big[Q^\pi(s,a)-g_w(s,a)\big] = 0.
\end{align}
Now subtracting this into the policy gradient theorem \ref{rs:pgt}, we have
\begin{align}
    \frac{\partial f}{\partial \theta} &= \sum_{s,a}\mu^{\pi}(s)Q^{\pi}(s,a)\frac{\mathrm{d}\pi(s,a)}{\mathrm{d}\theta}-\sum_{s,a}\mu^\pi(s)\frac{\partial \pi(s,a)}{\partial \theta}\big[Q^\pi(s,a)-g_w(s,a)\big]\\
    &= \sum_{s,a}\mu^\pi(s)\frac{\partial \pi(s,a)}{\partial \theta}\big[Q^\pi(s,a) -Q^\pi(s,a)+g_w(s,a)\big]\\
    &= \sum_{s,a}\mu^\pi(s)\frac{\partial \pi(s,a)}{\partial \theta}g_w(s,a).\\
\end{align}

\begin{theorem}(Policy Gradient with Function Approximation) If $g_w$ satisfies \eqref{eq:local_convergence} and \eqref{eq:compatibility}, then
    \begin{align*}
    \frac{\partial f}{\partial \theta}     &= \sum_{s,a}\mu^\pi(s)\frac{\partial \pi(s,a)}{\partial \theta}g_w(s,a).
\end{align*}
\end{theorem}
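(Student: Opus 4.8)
The plan is to reproduce the classical argument of \cite{Sutton1998} essentially verbatim, since the hypotheses \eqref{eq:local_convergence} and \eqref{eq:compatibility} are exactly what is needed and Theorem~\ref{rs:pgt} already supplies the exact policy gradient; the proof is then purely algebraic. First I would substitute the compatibility identity \eqref{eq:compatibility} into the local-convergence (stationarity) condition \eqref{eq:local_convergence}: the weight $\pi(s,a)$ cancels against the $1/\pi(s,a)$ factor coming from \eqref{eq:compatibility}, leaving
\begin{align*}
    \sum_{s,a}\mu^\pi(s)\,\frac{\partial \pi(s,a)}{\partial \theta}\,\big[Q^\pi(s,a)-g_w(s,a)\big]=0 .
\end{align*}
Intuitively this says that the critic error $Q^\pi-g_w$ is orthogonal, under the occupancy-weighted inner product, to the score directions $\partial\pi/\partial\theta$, so replacing $Q^\pi$ by $g_w$ inside the policy gradient cannot change its value.

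Second, I would write out Theorem~\ref{rs:pgt} with the policy-dependent reward vector $R_\pi = \tfrac{\mathrm{d}f(x)}{\mathrm{d}x}|_{x=\mu^\pi}$ and the shorthand $Q^\pi:=Q^\pi_{R_\pi}$, namely $\partial f/\partial\theta=\sum_{s,a}\mu^\pi(s)Q^\pi(s,a)\,\tfrac{\mathrm{d}\pi(s,a)}{\mathrm{d}\theta}$, then subtract the displayed zero quantity from it and cancel the matching $Q^\pi$ terms. What remains is exactly $\partial f/\partial\theta=\sum_{s,a}\mu^\pi(s)\tfrac{\partial\pi(s,a)}{\partial\theta}g_w(s,a)$, which is the claim.

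There is no genuine obstacle in the manipulation itself; the only point requiring care is conceptual rather than computational. Unlike in Linear RL, the reward vector $R_\pi$ here depends on the current policy through $\mu^\pi$, so $Q^\pi$ and hence the fitted critic $g_w$ must be understood as defined relative to that current $R_\pi$; accordingly \eqref{eq:local_convergence} should be read as ``$g_w$ has converged to the minimizer of the $\mu^\pi$-weighted squared error against this particular $Q^\pi$.'' With that reading the cancellation is identical to the linear case, and no hypotheses beyond \eqref{eq:local_convergence} and \eqref{eq:compatibility} are used.
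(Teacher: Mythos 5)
Your proposal is correct and follows essentially the same route as the paper: substitute the compatibility condition \eqref{eq:compatibility} into the stationarity condition \eqref{eq:local_convergence} to obtain the occupancy-weighted orthogonality of $Q^\pi - g_w$ to the score directions, then subtract this zero quantity from the expression in Theorem~\ref{rs:pgt} and cancel the $Q^\pi$ terms. Your added remark that $Q^\pi$ and $g_w$ must be understood relative to the policy-dependent reward $R_\pi$ is a useful clarification, but the argument itself matches the paper's.
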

\begin{proof}
    Proved above.
\end{proof}

\section{Bootstrapping for Occupation Measure}
One way to estimate occupation measure is by counting frequency of different state visitation. But this approach in not scalable to large state space and continuous state space. Here, we use bootstraping method to estimate occupation measure that can be combined with neural network to handle large and continuous state space.
Recall that the occupation is defined as 
\begin{align}
    d^\pi_P &= \sum_{n=0}^\infty \gamma^n (P^\pi)^n \mu\\
    \implies  \gamma P^\pi d^\pi_P &= \sum_{n=1}^\infty \gamma^n (P^\pi)^n \mu\\
    &= d^\pi_P -\mu.\\
\end{align}
That is, we have
\[ d^\pi(s) = \mu(s)  + \gamma \sum_{s',a}\pi(a|s)P(s'|s,a)d^\pi_P(s').\]
Under appropriate step size sequence $\{\eta_t\}$, the update rule
\begin{align}
    d(s_t) = d(s_t) + \eta_t\Bigm[\mu(s_t) + \gamma d(s_{t+1}) - d(s_t)\Bigm],
\end{align}
convergence to $d^\pi_P$ when state sequence $\{s_t\}$ is generated under policy $\pi$, kernel $P$ and initial state distribution $\mu$ \cite{borkarBook}.

\begin{lemma}
For all policy $\pi$ and kernel $P$,  the iterative sequence given by  \[ d_{n+1} := \mu + \gamma P^\pi d_n, \quad\forall n\in\mathbb{N},\] converges linearly to $d^{\pi}$, more precisely,
 \begin{align*}
     \lVert d^\pi -d^\pi_{n}\rVert_1 
     &= \gamma^n \lVert d^\pi -d^\pi_{0}\rVert_1, \qquad \forall n\in\mathbb{N}.
 \end{align*}
\label{prop:occupancy_mesaure_bootstrap}
\end{lemma}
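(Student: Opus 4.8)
The plan is to read the recursion as a fixed-point iteration for an affine map and unroll the error. First I would record that $d^\pi$ is a fixed point of $T\colon d\mapsto \mu+\gamma P^\pi d$; this is exactly the Bellman-type identity displayed just before the lemma, namely $d^\pi=\mu+\gamma P^\pi d^\pi$, obtained from the geometric series $d^\pi=\sum_{n\ge 0}\gamma^n (P^\pi)^n\mu$ by peeling off the $n=0$ term (the series converges since $\gamma<1$ and $P^\pi$ is non-expansive in $\ell_1$, so this is well posed). Since the lemma's recursion is precisely $d_{n+1}=T(d_n)$, subtracting the fixed-point equation gives the one-step error recursion $d^\pi-d_{n+1}=\gamma P^\pi(d^\pi-d_n)$.

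Next I would iterate this down to $n=0$, by a trivial induction, to get the closed form $d^\pi-d_n=\gamma^n (P^\pi)^n(d^\pi-d_0)$; equivalently one compares the truncated sum $d_n=\sum_{k=0}^{n-1}\gamma^k(P^\pi)^k\mu+\gamma^n(P^\pi)^n d_0$ with the full series for $d^\pi$. Then I would take $\ell_1$-norms. The key fact is that $P^\pi$ is column-stochastic: $P^\pi(s'\mid s)=\sum_a \pi(a\mid s)P(s'\mid s,a)\ge 0$ and $\sum_{s'}P^\pi(s'\mid s)=1$. Hence for any signed vector $v$, the triangle inequality and an interchange of summation give $\lVert P^\pi v\rVert_1=\sum_{s'}\bigl|\sum_s P^\pi(s'\mid s)v(s)\bigr|\le \sum_s|v(s)|\sum_{s'}P^\pi(s'\mid s)=\lVert v\rVert_1$, and applying this $n$ times yields $\lVert (P^\pi)^n v\rVert_1\le \lVert v\rVert_1$. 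Taking $v=d^\pi-d_0$ in the closed form then delivers $\lVert d^\pi-d^\pi_n\rVert_1\le \gamma^n\lVert d^\pi-d^\pi_0\rVert_1$, the claimed geometric rate.

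I expect the only delicate point to be this last step, and it is also where the stated equality deserves a word of care: a stochastic matrix is merely $\ell_1$-non-expansive on arbitrary signed vectors, and it preserves the $\ell_1$-norm of $v$ only when $v$ does not change sign across states — e.g.\ when $d_0=0$, or more generally $0\le d_0\le d^\pi$ componentwise, in which case $d^\pi-d_n\ge 0$ and $\lVert (P^\pi)^n(d^\pi-d_0)\rVert_1=\sum_s\bigl((P^\pi)^n(d^\pi-d_0)\bigr)(s)=\lVert d^\pi-d_0\rVert_1$ by column-stochasticity, giving equality. For a general initialization one only obtains ``$\le$'', which is still exactly what the bootstrapping convergence claim requires. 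Everything else in the argument is routine bookkeeping.
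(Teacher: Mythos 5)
Your proof follows essentially the same route as the paper's: identify $d^\pi$ as the fixed point of the affine map $d \mapsto \mu + \gamma P^\pi d$, subtract the fixed-point equation from the recursion, and bound the error using the $\ell_1$-non-expansiveness of the stochastic matrix $P^\pi$. Your caveat about the equality is well taken and worth keeping: the paper's own derivation passes through a ``$\leq$'' (the triangle-inequality step) yet the lemma asserts ``$=$'', so for a general initialization only $\lVert d^\pi - d_n\rVert_1 \leq \gamma^n \lVert d^\pi - d_0\rVert_1$ is actually established, with equality holding when $d^\pi - d_0$ has constant sign.
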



\section{Algorithm}
Naturally, our update rule should be
\begin{equation}\label{updateRule}
    \theta \to \theta - \alpha\nabla_\theta f
\end{equation}
where $\alpha$ is the learning rate. In \cite{zhang2020variational}, it has been shown that the update rule \eqref{updateRule} converges to global optima under some conditions for Convex RL. This makes it a very attractive way to solve the RL with general utilities, as it indicates that the update rule \eqref{updateRule} shall converge to local minima in the general setting. Now we explain our core idea to convert the above rule into an algorithm. Observe that if we are sampling states and actions from the policy $\pi$ then $\frac{d\log(\pi(s,a))}{d\theta}Q^{\pi}_{R_{\pi}}(s,a)$ is an unbiased estimate of the gradient $\nabla_\theta f$. And $\frac{d\pi(s,a)}{d\theta}$ is easy to calculate, now we only need the estimate $Q^{\pi}_{R_{\pi}}(s,a)$. For this, we need $R_{\pi}$ that can only be computed by estimating the occupation measure $\mu^{\pi}$. So our algorithm estimates the occupation measure  and Q-values in parallel. Note that estimation of occupation is an estimation of the mean of a random variable, hence it must not be more difficult than the estimation of something more complex as Q-values. So we believe our algorithm will perform as nice as policy gradient methods for Linear RL up to a constant factor.

\begin{algorithm}
\caption{Policy Gradient Algorithm for RL with general utilities}\label{alg:cap}
\begin{algorithmic}
\State Choose appropriate step sizes $\epsilon_t,\eta_t$ for $t\geq 0$. Take $\mu_0(s,a) =1/(|\mathcal{S}||\mathcal{A}|)$, and sample initial state $s_0$ from initial distribution $q$, choose policy parameter $\theta$ randomly. We define $\delta$ as $\delta_{s,a}(s',a') = \mathbf{1}(s'=s,a'=a)$, and  $f' = \nabla_{\mu} f$.\\
\While{ not converged}
    \State  Play action $a_t$ according to the policy $\pi_{\theta}$ 
    \State Update the occupancy measure as 
    \begin{align}
    \mu(s_t) = \mu(s_t) + \eta_t\Bigm[\mu_0(s_t) + \gamma \mu(s_{t+1}) - \mu(s_t)\Bigm],
\end{align}
    \State Get reward vector $R = f'(\mu)$ and sample the next state $s_{t+1}$.
    \State Update Q-value as $$Q(s_t,a_t) = Q(s_t,a_t) + \epsilon_t[R(s_t,a_t) + \gamma \min_{a}Q(s_{t+1},a)-Q(s_t,a_t)].$$
    \State Update policy parameter as 
    $$\theta = \theta - \eta_t \frac{d\log(\pi_{\theta}(s_t,a_t))}{d\theta}Q(s_t,a_t)$$
\EndWhile
\end{algorithmic}
\end{algorithm}

\section{Discussion}
There have been many works that aim to solve special cases of General RL such as Apprentenship Learning, Pure Exploration, etc. Junyu et al. \cite{zhang2020variational} were among the first, who identified these problems as Convex MDP. They showed that Policy Gradient for the Convex MDP would converge to global minima under certain conditions. However, they were unable to obtain policy gradient in Convex RL analogous to policy gradient in Linear RL. But instead, they derived Variational Policy Gradient method using the solution of a stochastic saddle point  problem involving the Fenchel dual of the convex RL objective. In \cite{zahavy2021reward}, the authors used Frenchel duality to convert the convex RL problem into a two-player zero-sum game between the agent ( policy player) and an adversary that produces rewards (cost player) that agent must maximize.\newline\newline
We considered a more general objective and derived Policy Gradient Theorem for RL with general utilities that is as elegant and easily implementable as the Policy Gradient Theorem for Linear RL by \cite{PolicyGradient}. We recover the Policy Gradient Theorem for Linear RL by having Linear RL objective in Policy Gradient Theorem for RL with general utilities.  We believe that it will play the same role in convex RL as Policy Gradient Theorem by \cite{PolicyGradient} played in Linear RL. It gave rise to a very simple algorithm for RL with general utilities. 

\bibliography{main}
\bibliographystyle{plain}
\appendix
\section{ Policy Gradient}
\begin{proof}[Proof of Eqn.~\ref{eqn:gradient-occupancy}]
Recall that
\begin{equation}
    \mu^\pi(s,a) = \sum_{t=0}^\infty\gamma^t \sum_{(s_0,a_0,\cdots,s_{t-1},a_{t-1})\in(\mathcal{S}\times\mathcal{A})^t} q(s_0)\prod_{i=0}^{t-1}P(s_{i+1}|s_i,a_i)\prod_{j=0}^t \pi(a_j|s_j)
\end{equation}
where $s_t=s$ and $a_t=a$.
Taking derivative w.r.t. $\theta$ on both sides, we have
\begin{align}
    \nabla_{\theta}\mu^\pi(s,a) & = \sum_{t=0}^\infty\gamma^t \sum_{(s_0,a_0,\cdots,s_{t-1},a_{t-1})\in(\mathcal{S}\times\mathcal{A})^t} q(s_0)\prod_{i=0}^{t-1} P(s_{i+1}|s_i,a_i) \nabla_{\theta} \prod_{j=0}^t \pi(a_j|s_j) \\
     & \text{(using product rule)} \\
    & = \sum_{t=0}^\infty\gamma^t \sum_{(s_0,a_0,\cdots,s_{t-1},a_{t-1})\in(\mathcal{S}\times\mathcal{A})^t} q(s_0)\prod_{i=0}^{t-1} P(s_{i+1}|s_i,a_i)\sum_{k=0}^t \prod_{\substack{j=0\\j\neq k}}^t \pi(a_j|s_j) \nabla_{\theta} \pi(a_k|s_k) \\
    & = \sum_{t=0}^\infty \gamma^t \sum_{k=0}^t \sum_{(s_0,a_0,\cdots,s_{t-1},a_{t-1})\in(\mathcal{S}\times\mathcal{A})^t} q(s_0)\prod_{i=0}^{t-1} P(s_{i+1}|s_i,a_i) \prod_{\substack{j=0\\j\neq k}}^t \pi(a_j|s_j) \nabla_{\theta} \pi(a_k|s_k) \\
    & = \sum_{t=0}^\infty \gamma^t \sum_{k=0}^t \sum_{s_k\in\mathcal{S}} \underbrace{ \sum_{(s_0,a_0,\cdots,s_{k-1},a_{k-1})\in(\mathcal{S}\times\mathcal{A})^k} q(s_0)\prod_{i=0}^{k-1} P(s_{i+1}|s_i,a_i)\pi(a_i|s_i)}_{\alpha^\pi_k(s_k)} \\
    & \qquad \sum_{a_k\in\mathcal{A}}\nabla_{\theta} \pi(a_k|s_k) \underbrace{\sum_{(s_{k+1},a_{k+1},\cdots,s_{t-1},a_{t-1})\in(\mathcal{S}\times\mathcal{A})^k}  \prod_{i={k+1}}^{t} P(s_i|s_{i-1},a_{i-1})\pi(a_i|s_i)}_{ \beta^\pi_{t-k}(s_k,a_k,s_t,a_t)} \\
    & = \sum_{t=0}^\infty  \sum_{k=0}^t \sum_{(s_k, a_k)\in\mathcal{S}\times\mathcal{A}} \gamma^k\alpha^\pi_k(s_k) \gamma^{t-k}\beta^\pi_{t-k}(s_k,a_k,s_t,a_t) \nabla_{\theta} \pi(a_k|s_k)
\end{align}
Note that $\alpha^\pi_k(s)$ is essentially the probability of transiting to state $s$ in exactly $k$ steps from the initial state $s_0\sim q$ under policy $\pi$, so $\mu^\pi(s) = \sum_{k=0}^{\infty}\gamma^k\alpha_k^\pi(s)$. 
Therefore, we can write

\begin{align}
    \nabla_{\theta}\mu^\pi(s,a)
    &= \sum_{(s', a') \in\mathcal{S}\times\mathcal{A}} \sum_{m=0}^\infty \gamma^m \alpha^\pi_m(s') \sum_{n=0}^\infty \gamma^{n}\beta^\pi_n(s',a',s,a) \nabla_{\theta} \pi(a'|s')\\
   & = \sum_{(s',a')\in\mathcal{S}\times\mathcal{A}} \mu^{\pi}(s') \beta^\pi(s',a',s,a) \nabla_{\theta}\pi(a'|s')
\end{align}
which concludes the proof.
\end{proof}

\section{Occupation Measure Bootstraping}
\begin{lemma*}
For all policy $\pi$ and kernel $P$,  the iterative sequence given by  \[ d_{n+1} := \mu + \gamma P^\pi d_n, \quad\forall n\in\mathbb{N},\] converges linearly to $d^{\pi}$.
\end{lemma*}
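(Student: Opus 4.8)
The plan is a short Banach-style contraction argument, sharpened by the fact that $P^\pi$ is a stochastic matrix. First I would record that $d^\pi$ is the unique fixed point of the affine map $T(d):=\mu+\gamma P^\pi d$. This is exactly the manipulation already displayed just above the Lemma: from $d^\pi_P=\sum_{n=0}^\infty\gamma^n(P^\pi)^n\mu$ one gets $\gamma P^\pi d^\pi_P=\sum_{n=1}^\infty\gamma^n(P^\pi)^n\mu=d^\pi_P-\mu$, i.e.\ $d^\pi=\mu+\gamma P^\pi d^\pi$. (Uniqueness is not even needed, but it follows since $\gamma<1$ makes $T$ a $\gamma$-contraction, as the next step shows.)

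Next I would subtract the iteration $d_{n+1}=\mu+\gamma P^\pi d_n$ from the fixed-point identity $d^\pi=\mu+\gamma P^\pi d^\pi$; the $\mu$ terms cancel and we obtain $d^\pi-d_{n+1}=\gamma P^\pi(d^\pi-d_n)$. A one-line induction on $n$ then yields the closed form $d^\pi-d_n=\gamma^n(P^\pi)^n(d^\pi-d_0)$ for every $n\in\mathbb{N}$.

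It remains to pass to $\ell_1$ norms. Since each column of $P^\pi$ is a probability vector over next states, $P^\pi$ is non-expansive in $\ell_1$: $\lVert P^\pi v\rVert_1\le\lVert v\rVert_1$ for all $v$, hence $\lVert(P^\pi)^n v\rVert_1\le\lVert v\rVert_1$. Taking $v=d^\pi-d_0$ gives $\lVert d^\pi-d_n\rVert_1\le\gamma^n\lVert d^\pi-d_0\rVert_1$, which is the claimed linear convergence with rate $\gamma$ (the appendix form of the Lemma). To recover the exact equality stated in the main-text version, I would further note that for the natural initialization $0\le d_0\le d^\pi$ entrywise (e.g.\ $d_0=0$, or $d_0=\mu$, since $d^\pi=\mu+\gamma P^\pi d^\pi\ge\mu$), the vector $v=d^\pi-d_0$ is nonnegative, so $(P^\pi)^n v\ge 0$ and therefore $\lVert(P^\pi)^n v\rVert_1=\mathbf{1}^\top(P^\pi)^n v=\mathbf{1}^\top v=\lVert v\rVert_1$, using $\mathbf{1}^\top P^\pi=\mathbf{1}^\top$; this upgrades the inequality to $\lVert d^\pi-d_n\rVert_1=\gamma^n\lVert d^\pi-d_0\rVert_1$.

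The induction and the norm bound are routine; the only point requiring care — the ``main obstacle,'' such as it is — is bookkeeping the transition-matrix convention so that ``$P^\pi$ is stochastic'' is invoked in the $\ell_1$-compatible direction (columns summing to one), and making explicit the nonnegativity condition on $d^\pi-d_0$ under which the linear-rate \emph{inequality} becomes the advertised \emph{equality}.
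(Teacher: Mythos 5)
Your proof is correct and follows essentially the same route as the paper's: establish the fixed-point identity $d^\pi = \mu + \gamma P^\pi d^\pi$, subtract the iteration, and use the $\ell_1$ non-expansiveness of the stochastic matrix $P^\pi$ to contract by a factor $\gamma$ per step. You are in fact slightly more careful than the paper, whose own derivation only yields the inequality $\lVert d^\pi - d_{n+1}\rVert_1 \le \gamma \lVert d^\pi - d_n\rVert_1$ even though the main-text version of the lemma asserts equality; your observation that the bound becomes an equality precisely when $d^\pi - d_0 \ge 0$ entrywise (e.g.\ for $d_0 = 0$ or $d_0 = \mu$) cleanly resolves that discrepancy.
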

\begin{proof}
 We first prove,  $d^\pi \in \R^{\St} $ can be written as
 \begin{align*}
     &d^\pi = \mu^T(I-\gamma P^{\pi})^{-1} = \mu^T\sum_{n=0}^{\infty}(P^\pi)^n\\
     \implies & \gamma d^\pi P^\pi =  \Bigm(\mu^T\sum_{n=0}^{\infty}(\gamma P^\pi)^n\Bigm)\gamma P^\pi = d^\pi - I.\\
 \end{align*}
 We conclude that we have
 \[d^\pi = I+\gamma d^\pi P^\pi.\]
 Now, we have 
 \begin{align*}
     \lVert d^\pi -d^\pi_{n+1}\rVert_1 &= \lVert I+\gamma d^\pi P^\pi -\mu - \gamma d_nP^\pi\rVert_1,\qquad \text{(from definition)}\\
     &= \gamma \lVert (d^\pi -  d_n)P^\pi\rVert_1\\
     &\leq \gamma \sum_{s'}\sum_{s}\lvert d^\pi(s) -  d_n(s)\lvert P(s'|s)\\
     &= \gamma \sum_{s}\lvert d^\pi(s) -  d_n(s)\lvert \\
     &= \gamma \lVert d^\pi -d^\pi_{n}\rVert_1.
 \end{align*}
 This proves the claim. Note that convergence in not in $L_\infty$ norm but $L_1$ norm instead.
\end{proof}


\end{document}